  \providecommand\BibTeX{{%
    \normalfont B\kern-0.5em{\scshape i\kern-0.25em b}\kern-0.8em\TeX}}}
\newcommand\blfootnote[1]{%
\begingroup 
\renewcommand\thefootnote{}\footnote{#1}%
\addtocounter{footnote}{-1}%
\endgroup 
}
\begin{document}
\fancyhead{}

\title{Towards Adversarial Patch Analysis and Certified Defense against Crowd Counting}

\author{Qiming Wu$^\dagger$}
\affiliation{%
  \institution{School of Electronic Information and Communications, Huazhong University of Science and Technology}
  \city{Wuhan}
  \country{China}
}
\email{qimingwu@hust.edu.cn}

\author{Zhikang Zou$^\dagger$}
\affiliation{%
  \institution{Department of Computer Vision Technology (VIS), Baidu Inc.}
  \city{Shenzhen}
  \country{China}
}
\email{zouzhikang@baidu.com}

\author{Pan Zhou$^*$}
\affiliation{
\institution{The Hubei Engineering Research
Center on Big Data Security, School
of Cyber Science and Engineering, Huazhong University of Science and Technology}
  \city{Wuhan}
  \country{China}
}
\email{panzhou@hust.edu.cn}

\author{Xiaoqing Ye}
\affiliation{%
  \institution{Department of Computer Vision Technology (VIS), Baidu Inc.}
  \city{Shanghai}
  \country{China}
}
\email{yexiaoqing@baidu.com}

\author{Binghui Wang}
\affiliation{%
  \institution{Department of Computer Science, Illinois Institute of Technology}
  \city{Chicago}
  \country{USA}}
\email{bwang70@iit.edu}

\author{Ang Li}
\affiliation{%
  \institution{Department of Electrical and Computer Engineering, Duke University}
  \city{Durham}
  \country{USA}}
\email{ang.li630@duke.edu}

\renewcommand{\shortauthors}{Wu and Zou, et al.}

\begin{abstract}
    Crowd counting has drawn much attention due to its importance in safety-critical surveillance systems. Especially, deep neural network (DNN) methods have significantly reduced estimation errors for crowd counting missions. Recent studies have demonstrated that DNNs are vulnerable to adversarial attacks, i.e., normal images with human-imperceptible perturbations could  mislead DNNs to make false predictions. In this work, we propose a robust attack strategy called Adversarial Patch Attack with Momentum (APAM) to systematically evaluate the robustness of crowd counting models, where the attacker's goal is to create an adversarial perturbation that severely degrades their performances, thus leading to public safety accidents (e.g., stampede accidents). Especially, the proposed attack leverages the extreme-density background information of input images to generate robust adversarial patches via a series of transformations (e.g., interpolation, rotation, etc.). We observe that by perturbing less than 6\% of image pixels, our attacks severely degrade the performance of crowd counting systems, both digitally and physically. To better enhance the adversarial robustness of crowd counting models, we propose the first regression model-based Randomized Ablation (RA), which is more sufficient than Adversarial Training (ADT) (Mean Absolute Error of RA is 5 lower than ADT on clean samples and 30 lower than ADT on adversarial examples). Extensive experiments on five crowd counting models demonstrate the effectiveness and generality of the proposed method.
\end{abstract}

\begin{CCSXML}
<ccs2012>
<concept>
<concept_id>10003752.10003809.10010047.10010051</concept_id>
<concept_desc>Theory of computation~Adversary models</concept_desc>
<concept_significance>500</concept_significance>
</concept>
</ccs2012>
\end{CCSXML}

\ccsdesc[500]{Theory of computation~Adversary models}

\keywords{Regression Learning; Crowd Counting; Adversarial Robustness; Adversarial Patch Attack; Certified Defense}

\maketitle

\blfootnote{$^\dagger$Equal Contribution.}
\blfootnote{$^*$Corresponding author: Pan Zhou.}

\begin{figure}
   \includegraphics[width=0.48\textwidth]{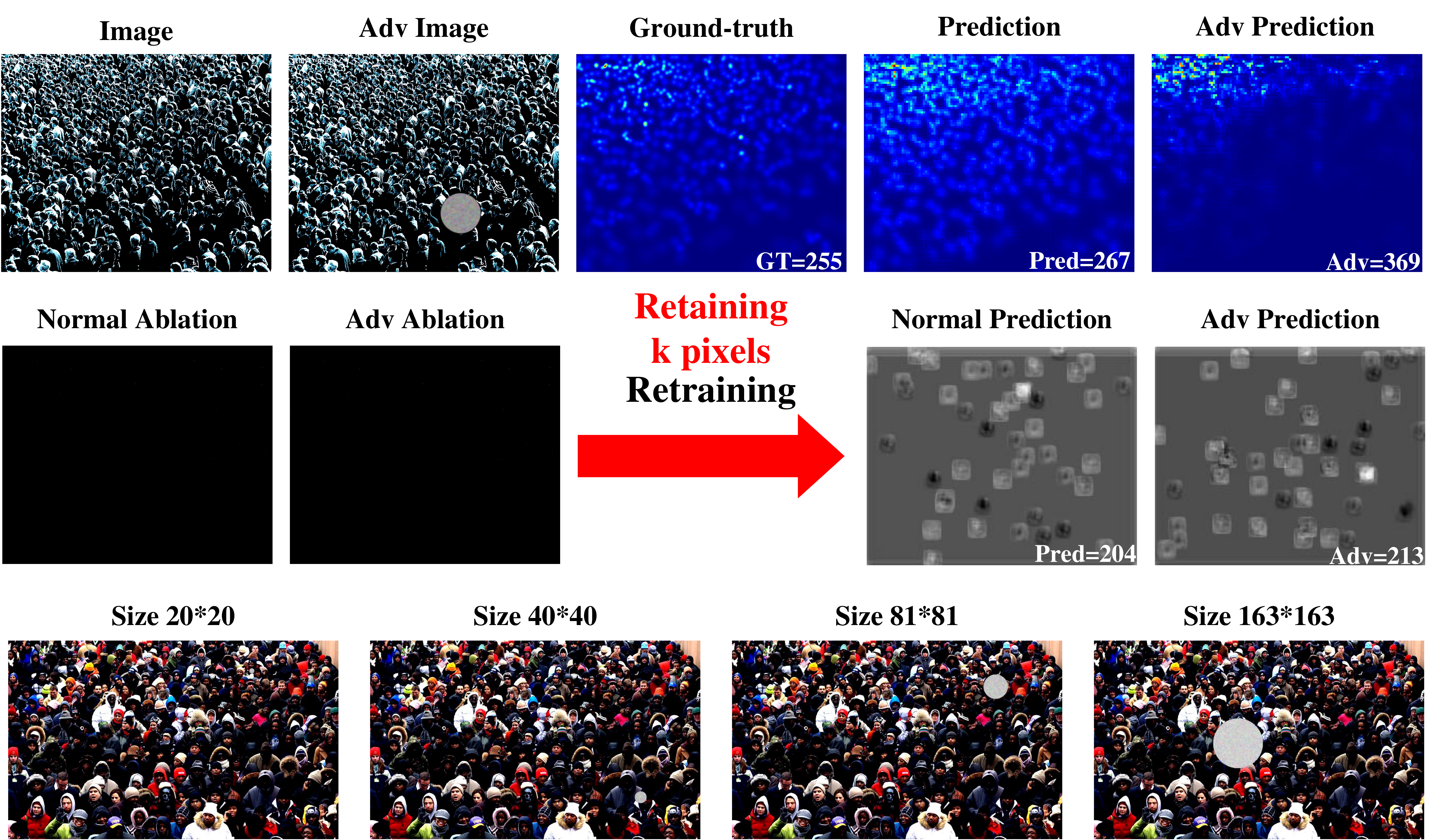}
   \caption{The overview of the proposed adversarial attack methods and certified defense strategy. The first row shows adversarial examples generated by APAM algorithm (retaining parameter $k = 45$) and the second row shows that after retraining with ablated images, the output density maps of normal and adversarial images have little differences. The third row illustrates the imperceptibility of the generated adversarial patch. It is hard for human eyes to find the patch in a congested scenes when patch size is $20 \times 20$ or $40\times40$.}
   \label{fig:1}
   \vspace{-5mm}
\end{figure}

\section{Introduction}
Due to the global outbreak of the COVID-19 virus, a large amount of public places are requiring people to keep the social distance. Therefore, the video surveillance systems composed of crowd counting models \cite{zhang2015crowd} are undoubtedly gaining prominence in the administration. As crowd counting is one of the most significant applications of deep neural networks (DNNs) and has been adopted in many safety-critical scenarios like video surveillance \cite{zhang2015crowd} and traffic control \cite{guerrero2015extremely}. However, recent works have demonstrated the vulnerability of DNNs towards adversarial examples \cite{szegedy2013intriguing,goodfellow2014explaining}, this definitely provides attackers with new interface to perform attacks for malicious purposes. Naturally, attackers may intend to generate perturbations that fool DNN models to count the crowd inaccurately so as to increase the possibility of causing public safety accidents (e.g., viral infection, stampede and severe traffic accidents.).

To achieve the adversarial goal, we design the Adversarial Patch Attack with Momentum (APAM) algorithm on the crowd counting systems. Compared with the classic adversarial examples restricted with $L_0$, $L_2$ or $L_{\infty}$ distance metrics \cite{moosavi2016deepfool, goodfellow2014explaining, carlini2017towards, papernot2016limitations}, the APAM attacks are not limited to the norm bound (since the $L_p$ norm bound is to guarantee the imperceptibility of adversarial examples, we claim that the adversarial patch generated by APAM algorithm can achieve the goal by reducing the patch size, e.g., In the third row of Fig. \ref{fig:1}, $20\times20$ and $40 \times 40$ patches are hard to find in the dense crowd scenes) and our proposed algorithm can accelerate the optimization with momentum to obtain robust adversarial patches. Therefore, performing adversarial patch attacks to evaluate the robustness of crowd counting systems is definitely a promising research direction, which is the major focus of this work.\par

\textbf{Concurrent Work.} There is only one recent work \cite{liu2019using} which introduces a defense strategy to study the robustness of crowd counting systems. However, the proposed defense strategy relies on the depth information of RGBD datasets, which is not generally available in the crowd counting applications. Additionally, they ignore the importance of background information in crowd counting, where such information 
is effectively exploited by our attacks. \par

\textbf{Motivation.} We observe that the extremely dense background has been the key obstacle in crowd counting. Numerous works are dedicated to overcome this challenge by utilizing deeper convolutional neural networks (CNN). However, due to the inherent vulnerability of CNN, such CNN-based methods leave a new vulnerable interface for potential attackers. This motivates us to design novel adversarial attacks so as to better understand and improve the robustness of the CNN-based crowd counting system. \par

\textbf{APAM Attacks.} Our proposed attacks aim to leverage the congested background information for generating the adversarial patches. In addition, we enhance the algorithm of generating the adversarial patches with momentum and remove the norm bound to strengthen the attack capabilities (Fig. \ref{fig:1} shows the generated adversarial examples on a typical dataset).\par

\textbf{Certified Defense via Randomized Ablation.} We propose a certified defense strategy against APAM attacks on crowd counting, namely, randomized ablation. Our defense strategy consists of two parts: image ablation and certificate retraining crowd counting models. The first step is inspired by the recent advance in image classifier certification \cite{levine2020robustness}.
Specifically, randomized ablation is effective against APAM attacks because the ablation results of normal image $x$ and adversarially perturbed image $\tilde{x}$ are likely to be same (e.g., retaining 45 pixels for each images in Fig. \ref{fig:1}). Note that several other methods have been proposed to certify the robustness, like dual approach \cite{dvijotham2018dual},
interval analysis \cite{gowal2018effectiveness}, and abstract interpretations \cite{mirman2018differentiable}. Compared with these methods, randomized ablation is simpler and more importantly, scalable to complicated models.\par

Our major contributions are summarized as follows:
\begin {itemize}
\item[$\bullet$] To the best of our knowledge, this is the first work to propose a systematic and practical method on the evaluation of the robustness of crowd counting models via adversarial patch attacks and the certified defense strategy (i.e., randomized ablation).

\item[$\bullet$] We design a robust adversarial patch attack framework called Adversarial Patch Attack with Momentum (APAM) to create effective adversarial perturbations on mainstream CNN-based crowd counting models. 

\item[$\bullet$]We implement the APAM attack algorithm on the network in two forms: white-box attack and black-box attack. We evaluate the proposed attacks in both digital and physical spaces. Qualitative and quantitative results demonstrate that our attacks significantly degrade the performances of models, hence, pose severe threats to crowd counting systems.

\item[$\bullet$] We provide the first theoretical guarantee of the adversarial robustness of crowd counting models via randomized ablation. More practically, after training the verification models with this strategy, we achieve the significant robustness enhancement. Meanwhile, our proposed method defeats the traditional adversarial (patch) training both on clean sample and adversarial example evaluation tests.
\end {itemize}

\section{Related Work}

\subsection{Background of Crowd Counting}
Crowd analysis is an inter-disciplinary research topic with researchers from different domains \cite{sindagi2018survey}, the approaches of studying crowd counting are also characterized by multidisciplinary integration \cite{idrees2013multi, chen2013cumulative}. Initial research approaches are divided into three categories \cite{sindagi2018survey}: detection-based methods, regression-based methods and density estimation-based methods. A survey of classical crowd counting approaches 
is available in \cite{loy2013crowd}. However, the quality of predicted density map generated by these classical crowd counting methods are limited when applied in congested scenes. Thanks to the success of CNNs in other fields \cite{chen2015deepdriving, wang2014fingerprint, tzortzis2007deep}, researchers recently propose the CNN-based density estimation approaches \cite{li2018csrnet, zou2018net, zhang2016single, liu2019context, sindagi2017cnn} to find a way out of the dilemma. A survey of the CNN-based crowd counting methods is available in \cite{sindagi2018survey}.\par

Although more and more impressive models have yielded exciting results \cite{Acemap.226409936,Acemap.30997604,Acemap.230230506,Acemap.191633431,Acemap.354247512,Acemap.248673452,Acemap.203671797} on the bench-mark datasets, their robustness has not been reasonably understood. In particular, we focus on the systematic evaluation and understanding of the robustness of these five models \cite{liu2019context, li2018csrnet, sindagi2017cnn, zou2018net, zhang2016single} in this article.\par

\subsection{Adversarial Attacks}
\textbf{$L_p$ Norm Bounded Adversarial Perturbation.}
Recent works have demonstrated the existence of adversarial examples in deep neural networks \cite{szegedy2013intriguing}, a variety of methods such as FGSM \cite{goodfellow2014explaining}, Deepfool \cite{moosavi2016deepfool}, C\&W \cite{carlini2017towards} and JSMA \cite{papernot2016limitations}, have been proposed to generating adversarial examples bounded by $L_p$ norms. Basically, the problem is formulated as: $ \vert \vert \tilde{x} - x \vert \vert_p \le \epsilon$, where $\epsilon$ is the parameter control the strength of perturbation. Researchers often choose $L_0$, $L_2$ and $L_{\infty}$ metrics in practice. $L_0$ norm counts the number of changed pixels in $\tilde{x}$, $L_2$ norm is formulated as $\vert \vert \tilde{x} - x \vert \vert_2 = (\Delta x_1)^2 + (\Delta x_2)^2 + \cdots + (\Delta x_n)^2$ and $L_{\infty}$ norm is formulated as $\vert \vert \tilde{x} - x \vert \vert_{\infty} = max \{ \Delta x_1, \Delta x_2,\cdots, \Delta x_n \}$. The attacker aims to find the optimal adversarial example $\tilde{x} = x + \delta$ to fool the neural networks. A survey of adversarial examples in deep learning is available in \cite{yuan2019adversarial}.

\textbf{Empirical Defense Strategy.} Many defense strategies \cite{akhtar2018threat} have been proposed, such as  network distillation \cite{papernot2016distillation},adversarial training \cite{goodfellow2014explaining, huang2015learning}, adversarial detecting \cite{lin2017detecting,lu2017safetynet,metzen2017detecting}, input reconstruction \cite{gu2014towards}, classifier robustifying \cite{bradshaw2017adversarial}, network verification \cite{katz2017towards} and ensemble defenses \cite{meng2017magnet}, etc. However, these defense strategies have a common major flaw: almost all of the above defenses have an effect on only part of the adversarial attacks, and even have no defense effect on some invisible and powerful attacks.\par

\textbf{Adversarial Patch Attacks.} As numerous empirical defense strategies have been proposed to defend against $L_p$ norm adversarial attacks, researchers explore adversarial patch attacks to further fool DNNs \cite{Brown2017AdversarialP}. Attackers obtain the adversarial patch via optimizing the traditional equation  $\widehat{p}=\mathop{\arg\max}\limits_{p} \{ log Pr(\widehat{y}) \vert \widehat{A}(x, l, p, t) \}$ in \cite{Brown2017AdversarialP}. Specifically, $\hat{A}(x, l, p, t)$ is a patch application operator, where $x$ is the input image, $l$ is the patch location, $p$ is the patch, and $t$ is the image transformation. Because the adversarial patch attacks are image-independent, it allows attackers to launch attacks easily regardless of scenes and victim models. Moreover, the state-of-the-art empirical defenses, which focus on small perturbations, may not be effective against large adversarial patches. Meanwhile, the adversarial patch attack has been widely applied to many safety-sensitive applications, such as face recognition \cite{yang2019design, pautov2019adversarial} and object detection \cite{saha2019adversarial,lee2019physical,liu2018dpatch}, which inspires this work either.

\section{Problem Setup}
We first define the regression models, and then introduce the background of crowd counting. After that, we define our attacks against crowd counting models.

\textbf{Definition 3.1} (\emph{Regression Models}). In statistical modeling, regression models refers to models that can estimate the relationships between a dependent variable and independent variables.\par

\subsection{Regression based Crowd Analysis}
Given a set of $N$ labeled images $D=\{(x_i, l_i) \}_{i=1}^N$, where 
$x_i \in \mathbb{R}^{H_{I} \cdot W_{I} \cdot C_{I}}$ and $H_I$, $W_I$, and $C_I$ are the height, width, and channel number of the image, respectively. $l_i$ is the $i-th$ ground truth density map of image $x_i$. Then, a crowd counting system aims to learn a model $f_{\theta}$, parameterized by $\theta$, by using these labeled images and solving the following optimization problem:
\begin{equation}
   \min \frac{1}{2N} \sum^N_{i=1} \vert \vert f_\theta (x_i) - l_i \vert \vert^2.
\end{equation}
Note that researchers recently have adopted more effective loss functions in crowd counting \cite{cheng2019learning,ma2019bayesian} and we consider the most commonly used $L_2$ loss function. Moreover, different crowd counting models \cite{li2018csrnet, zou2018net, zhang2016single, liu2019context, sindagi2017cnn} will use different architectures. For instances, MCNN \cite{zhang2016single} uses Multi-column convolutional neural networks to predict the density map. The learned model $f_{\theta}$ can be used to predict the crowd count in a testing image $x$. Specifically, $f_{\theta}$ takes $x$ as an input and outputs the predicted density map $f_{\theta}(x_i)$. Then, the crowd count in $x$ is estimated by summing up all values of the density map.\par

As crowd counting systems are of great importance in safety-critical applications, such as video surveillance, an adversary is motivated to fool the systems to count the crowd inaccurately so as to increase the possibility of causing public safety accidents. Next, we will introduce the threat model and formally define our problem.\par

\subsection{Threat Model}

{\bf Adversary's knowledge:} Depending on how much information an adversary knows about the crowd counting system, we characterize an adversary's knowledge in terms of the following two aspects:
\begin{itemize}
\item \textbf{Full knowledge.} In this scenario, the adversary is assumed to have all knowledge of the targeted crowd counting system, for example, model parameters, model architecture, etc. \par

\item \textbf{Limited knowledge.} In this scenario, the adversary has no access to the model parameters of the targeted crowd counting system. 
In practice, however, there exist various crowd counting systems different from the targeted system.  
We assume the adversary can adopt these crowd counting systems as the substitution and perform an attack on these substitute systems. \par

\end{itemize}

\noindent {\bf Adversary's capability.}
We consider different capabilities for an adversary to launch attacks. 
In the full knowledge setting, an adversary can launch the \emph{white-box attack}, i.e., it can generate an adversarial patch to a testing image by directly leveraging the model parameters of the targeted crowd counting system. In the limited knowledge setting, an adversary can launch the \emph{black-box attack}, i.e., an adversary cannot leverage the model information of the targeted system, but can use several substitute crowd counting systems to generate an adversarial patch. Plus, we also study \emph{physical attack}. In this scenario, we aim to attack crowd counting systems in a real-world case. Attacking crowd counting systems in the real world is much more difficult than in the digital space. With less information obtained, the attacker directly poses the generated adversarial patch in the scene to fool networks. This requires the adversarial patch to generalize well across various crowd counting systems. To this end, we randomly select a shopping mall to evaluate our generated adversarial patch.

\noindent {\bf Adversary's goal.} 
Given a set of testing images with ground truth crowd counts and a targeted crowd counting system, an adversary aims to find an adversarial patch for each testing image such that the perturbed image has a predicted crowd count by the targeted system that largely deviates from the ground truth.\par

\subsection{Problem Definition}

Given a crowd counting model $f_{\theta}$ and a testing image $x$ with ground truth density map $l$, we aim to add an adversarial perturbation (i.e., adversarial patch in our work) $\delta$ to the testing image $x$ such that the model $f_{\theta}$ predicts the crowd count in the testing image as the adversary desires. 
Note that the crowd count is calculated by the summation of all values of the density map, modifying the prediction of the crowd count equals to modifying the density map. Suppose the adversary aims to learn a targeted density map $l^\ast$ with a perturbation $\delta$. 
Then, our attack can be defined as follows:
\begin{equation}
    \mathop{\min}_{\delta} H(x+\delta, x),\quad s.t. \quad  f_\theta(x+\delta)=l^\ast, \label{equ:2}
\end{equation}
where $H$ is a distance function. 

Directly solving Eq. \ref{equ:2} is challenging due to that the equality constraint involves a highly nonlinear model $f_\theta$. An alternative way is to put the constraint into the objective function. Specifically, 
\begin{equation}
    \mathop{\arg\min}_{\delta :\  \vert \vert \delta \vert \vert_q \leq \varepsilon} J(f_\theta(x+\delta);l^\ast), \label{equ:3}
\end{equation}
where $\varepsilon$ is the budget constraint and $J$ is a loss function (e.g., cross-entropy loss).\par

\begin{figure}[t]
\centering
 \includegraphics[width=0.48\textwidth]{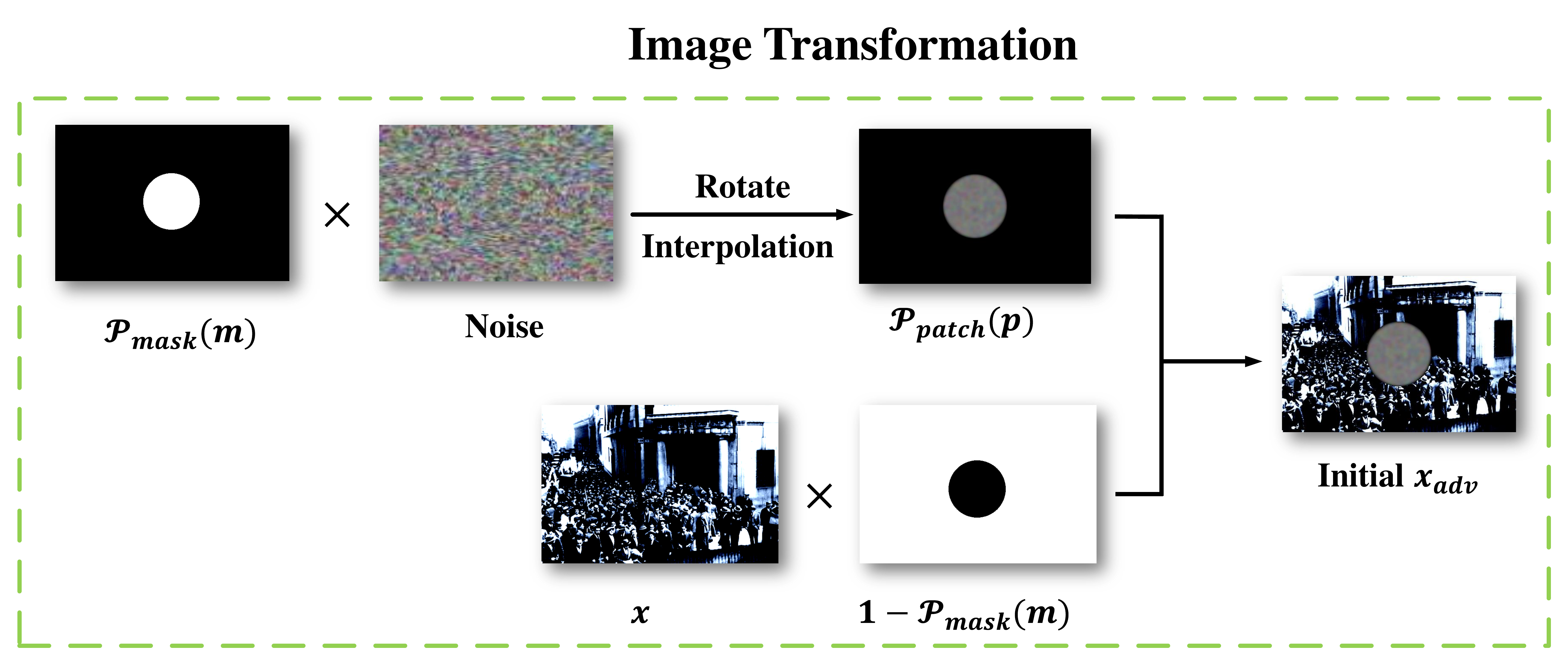}
 \caption{Description of the Image Transformation Function. Black pixels have value 0 and white pixels have value 1.
 }\label{fig:6}
 \vspace{-3mm}
\end{figure}

\begin{figure*}
   \includegraphics[width=0.8\textwidth]{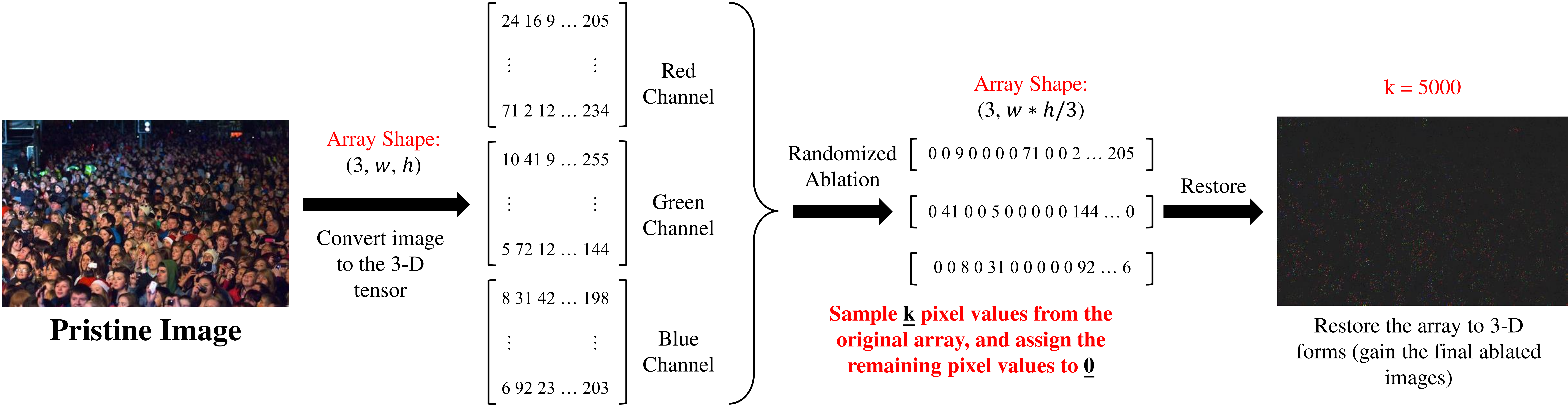}
   \caption{The illustration of the proposed defense method. The details of the randomized ablation are summarized in Section. \ref{section:RA_defense}. This image process is done during the certificate training, and we could gain the specific ablated image when given the pristine one. Moreover, the model will not overfit on the obtained ablated image dataset since the ablated images vary in training epochs (i.e., the results of sampling $k$ pixels of the image vary from epochs to epochs).}
   \label{fig:randomized ablation}
\end{figure*}

\section{The Proposed Attack}
In this section, we introduce the proposed APAM attack in following orders: we first design it under the white-box setting. And then, the black-box settings.\par

\subsection{White-box Attack}
Our proposed white-box attack consists of two phases: adversarial patch initialization and adversarial patch optimization with momentum.\par

\subsubsection{Adversarial Patch Initialization.} Our patch initialization process includes two steps: image transformation and interpolation smoothness.\par

\textbf{Image Transformation.} As researchers have demonstrated that Cyber-physical systems can destroy perturbations crafted using digital-only algorithms \cite{lu2017no} and the physical perturbation can be affected by environmental factors, including viewpoints \cite{eykholt2018robust}. To solve the problem, we manipulate the image transformation function to make the patch more robust through Eq. \ref{equ:7} and Eq. \ref{equ:19}. Also, the pipelines of image transformation function is vividly illustrated in Fig. \ref{fig:6}.\par

Inspired by \cite{qiu2019semanticadv}, we propose to interpolate the generated adversarial patch with tensor $\beta$. As $\beta \in \mathbb{R}^{H_I \times W_I \times C_I}$ is one tensor in the image space and $\beta_{h,w,c} \in (0,1)$, where $h \in [1, H_I]$, $w \in [1,W_I]$ and $c \in [1, C_I]$. Manipulating this kind of interpolation guarantees the quality of the adversarial patch. We then formulate the initialization of adversarial examples as follows:
\begin{equation}
    \tilde{x}= \mathcal{P}_{patch}(p) + (1- \mathcal{P}_{mask}(m)) \cdot x,
    \label{equ:7}
\end{equation}
where
\begin{equation}
    \mathcal{P}_{patch} = \mathcal{I} \circ \mathcal{R} \circ ( \mathcal{P}_{mask}(m) \cdot Noise ).  \label{equ:19}
\end{equation}

\textbf{Interpolation Smoothness.} As the interpolation tensor $\beta$ contains many parameters, we simplify the problem by considering setting a smoothness constraint on $\beta$. As defined in Eq. \ref{equ:8}, the smoothness loss is widely used in processing images as pixel-wise de-noising objectives \cite{mansimov2015generating, johnson2016perceptual}.\par

\begin{equation}
\begin{aligned}
    \mathcal{L}_{sm}(\beta) & = \sum_{w=1}^{W_{I}-1} \sum_{h=1}^{H_{I}} \vert \vert \beta_{w+1,h}-\beta_{w,h} \vert \vert^2 \\
    & + \sum_{w=1}^{W_{I}} \sum_{h=1}^{H_{I}-1} \vert \vert \beta_{w,h+1} - \beta_{w,h} \vert \vert^2,
\end{aligned} \label{equ:8}
\end{equation}

\subsubsection{Adversarial Optimization Objectives}

\

\textbf{Adversarial Patch Generation.} We acquire the final adversarial patch $x_t^{adv}$ through minimizing the objective $\mathcal{L}_{APAM}$ in Eq. \ref{equ:5} with the initial adversarial example $\tilde{x}$ generated through the image transformation function $\mathcal{T}$. Particularly, our objective function $\mathcal{L}_{APAM}$ has two parts: the adversarial loss $J$ and the smoothness regularization $\mathcal{L}_{sm}(\beta)$. The smoothness term aims to smooth the optimization of adversarial patches and guarantee the perceptual quality of the patch since it is scaled and rotated during image transformation process. To make the final generated adversarial robust, we propose to minimize the following objective function:

\begin{equation}
\begin{aligned}
    x_t^{adv} & =\mathop{\arg\min}_{\tilde{x}} \mathcal{L}_{APAM} (\tilde{x}, l^\ast, f_\theta) \\ 
    & = J (f_\theta(\tilde{x}); l^\ast) + \gamma \cdot \mathcal{L}_{sm}(\beta),
    \label{equ:5}
\end{aligned}
\end{equation}
where $\gamma$ is a hyperparameter to balance the two terms.

\textbf{Momentum-based Optimization.} Momentum method is usually used in the gradient descent algorithm to accelerate the optimization process with the help of the memorization of previous gradients. As the adversary searches the optimal adversarial example $\tilde{x}$ in the high-dimensional space, there are high possibilities of being trapped in small humps, narrow valleys and poor local minima or maxima \cite{duch1998optimization, dong2018boosting}. In order to break the dilemma, we integrate 
momentum into the optimization of the adversarial patch so as to update more stably and further enhance the potential capability of the attacker. Specifically, 
\begin{equation}
    q_{t+1} = \mu q_t + \frac{\nabla_{x}J_{\theta}(\tilde{x}_t,l^\ast)}{\vert \vert \nabla_{x}J_{\theta}(\tilde{x}_t,l^\ast) \vert \vert}.
    \label{equ:16}
\end{equation}

Researchers propose to boost the traditional adversarial attacks with momentum in Eq. \ref{equ:16} to generate perturbations $\tilde{x}_{t+1} = \tilde{x}_t + \epsilon \cdot sign(q_{t+1}) $  iteratively \cite{dong2018boosting}. Inspired by their work, we extend the optimization process of the adversarial patch with momentum. By adding variables to control the exponentially weighted average, the optimization process can be smoothed and accelerated.

Therefore, the generated adversarial patch is capable of transferring across various models and its attack ability simultaneously remains strong, thus demonstrating robust adversarial perturbations.\par

\subsection{Black-box Attack}
In a black-box attack, an adversary has no access to the internal structure of victim models. However, the adversary can adopt substitute models to generate an adversarial patch.
In order to make the adversarial patch robust, we consider jointly attacking multiple crowd counting systems. 
Specifically, the objective function of our black-box attack is defined as follows:
\begin{equation}
    \mathop{\arg\min}_{\tilde{x}} \frac{1}{n} 
    \sum_{i=1}^{n} \mathcal{L}_{APAM}((\tilde{x}, l^\ast, f_i),
    \label{equ:9}
\end{equation}
where each $f_i$ denotes a substitute crowd counting model with parameter $\theta_i$.\par

\section{Certified Defense via Randomized Ablation}\label{section:RA_defense}

In this section, we will clarify the principles of randomized ablation. A vivid process is depicted in Fig. \ref{fig:randomized ablation}. We use $S$ to denote the set of all possible pixel values and $\mathcal{X} = S^d$ represents the set of all images and $k$ is the retention constant. When using the null symbol NULL during encoding images, we adopt the mean pixel encoding method proposed in \cite{levine2020robustness} for simplicity and efficiency. Similarly, $[d]=\{1,...,d\}$ is the set of indices and $\mathcal{H}(d,k)\subseteq{\mathcal{P}([d])}$ denotes all sets of $k$ unique indices. Define random variable $\mathcal{T} \sim \mathcal{U}(d,k)$, where $\mathcal{U}(d,k)$ is the uniform distribution over $\mathcal{H}(d,k)$.\par
Then, we extend the randomized ablation scheme from classifier setting ($\mathbb{R}^n \rightarrow \{0,1\}^n$) to the real-valued function settings ($\mathbb{R}^n \rightarrow [0,1]^n$). 
To achieve this goal, we introduce a top-K overlap metric $R(x,\tilde{x},K)$ proposed in \cite{ ghorbani2019interpretation} to measure the adversarial robustness of crowd counting models.\par

\newtheorem{thm}{Theorem}
\begin{thm}\label{theorem1}
For the images $x$, $\tilde{x}$ and ablated images $\mathcal{A}(x) = ABLATE(x,\mathcal{T})$, base crowd counting model with parameter $f_{\theta}: R^n \xrightarrow{} [0,1]^n$, with patch size $n$ given:
\begin{equation}
    \frac{C_n^k}{C_d^k} \le Pr(R(f_{\theta}(\mathcal{A}(x)),f_{\theta}(\mathcal{A}(\tilde{x})),K) = i) \le \frac{C_{d-n}^k}{C_d^k},
    \label{equ:theroem}
\end{equation}
where $i \in [0,k]$ denotes the number of overlapping largest top k elements of output density maps of $\mathcal{A}(x)$ and $\mathcal{A}(\tilde{x})$. Intuitively, we can derive the upper bound and lower bound as:
\begin{equation}
    \underline{Pr} = Pr(R(f_{\theta}(\mathcal{A}(x)),f_{\theta}(\mathcal{A}(\tilde{x})),K) = 0) = \frac{C_n^k}{C_d^k},
\end{equation}

\begin{equation}
    \overline{Pr} = Pr(R(f_{\theta}(\mathcal{A}(x)),f_{\theta}(\mathcal{A}(\tilde{x})),K) = k) = \frac{C_{d-n}^k}{C_d^k},
\end{equation}
where $k$ is the number of retained pixels in the image.
\end{thm}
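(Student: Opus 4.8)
The plan is to reduce the statement to a purely combinatorial computation about the random retained index set $\mathcal{T}$, and then transfer that computation to the top-$K$ overlap $R$ by exploiting the fact that the two ablated images are coupled through a \emph{shared} draw of $\mathcal{T}$. First I would fix $x$ and $\tilde{x}$ and let $\Delta \subseteq [d]$ denote the set of pixel indices on which they disagree; since the adversary is restricted to a patch of size $n$, we have $|\Delta| \le n$, and I would analyze the worst case $|\Delta| = n$. The key structural observation is that $\mathcal{A}(x)$ and $\mathcal{A}(\tilde{x})$ are evaluated on the \emph{same} $\mathcal{T} \sim \mathcal{U}(d,k)$, so they agree on every retained coordinate outside $\Delta$ and can differ only on coordinates in $\mathcal{T} \cap \Delta$.

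Next I would isolate the two extremal events. Let $E_{\mathrm{avoid}} = \{\mathcal{T} \cap \Delta = \emptyset\}$ and $E_{\mathrm{inside}} = \{\mathcal{T} \subseteq \Delta\}$. Because $\mathcal{U}(d,k)$ is uniform over $\mathcal{H}(d,k)$, i.e.\ over all $k$-subsets of $[d]$, a direct count of subsets gives $Pr(E_{\mathrm{avoid}}) = C_{d-n}^k / C_d^k$ (subsets avoiding the $n$ perturbed indices) and $Pr(E_{\mathrm{inside}}) = C_n^k / C_d^k$ (subsets contained in the $n$ perturbed indices). On $E_{\mathrm{avoid}}$ the two ablated images are \emph{identical}, so $f_\theta(\mathcal{A}(x)) = f_\theta(\mathcal{A}(\tilde{x}))$, their top-$K$ index sets coincide entirely, and $R = k$; this yields $\overline{Pr}$. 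Symmetrically, $E_{\mathrm{inside}}$ is the configuration in which every retained pixel lies inside the perturbed region, which drives the two outputs to share none of their top-$K$ elements, giving $R = 0$ and the value $\underline{Pr}$.

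I would then obtain the sandwich for general $i \in [0,k]$ by arguing that $E_{\mathrm{avoid}}$ and $E_{\mathrm{inside}}$ are the extremes of the overlap count $|\mathcal{T} \cap \Delta|$: any intermediate $i$ corresponds to $\mathcal{T}$ meeting $\Delta$ in an intermediate number of coordinates, and the induced probability is squeezed between the all-inside and all-outside cases, so $C_n^k/C_d^k \le Pr(R = i) \le C_{d-n}^k/C_d^k$. The monotonicity needed here is that the ratio $C_{d-m}^k/C_d^k$ increases as the perturbed mass $m$ decreases from $n$ to $0$, combined with the fact that fewer perturbed retained pixels can only increase agreement of the top-$K$ sets.

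The step I expect to be the genuine obstacle is this last transfer, from the combinatorics of $\mathcal{T}$ to the behavior of the nonlinear map $f_\theta$. The implication ``$\mathcal{T}\cap\Delta=\emptyset \Rightarrow R=k$'' is immediate, since identical inputs force identical outputs, so the upper-bound direction is safe. The reverse identifications — that $R=0$ holds with probability exactly $C_n^k/C_d^k$, and that \emph{every} $Pr(R=i)$ is bounded above by $C_{d-n}^k/C_d^k$ — are not forced by the coupling alone, because $f_\theta$ could in principle send distinct ablated images to coincident top-$K$ sets. Making these exact requires an additional regularity assumption on $f_\theta$ (essentially, that the top-$K$ coordinates of the predicted density map are determined by, and strictly sensitive to, the retained pixels). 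I would therefore state that assumption explicitly and verify that it suffices before asserting the equalities $\underline{Pr}$ and $\overline{Pr}$, treating the bare coupling argument as establishing the inequalities of Eq.~\ref{equ:theroem} and the regularity assumption as upgrading the endpoints to equalities.
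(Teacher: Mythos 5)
Your proposal follows essentially the same route as the paper's own proof: couple the two ablated images through the shared draw of $\mathcal{T}$, identify $\{R=k\}$ with the avoidance event $\{\mathcal{T}\cap\Delta=\emptyset\}$ (probability at most $C_{d-n}^k/C_d^k$, with equality when $|\Delta|=n$) and the lower bound with the worst case that all retained pixels land inside the patch (probability $C_n^k/C_d^k$). The regularity gap you flag is genuine, but it is present in the paper's proof as well --- the paper silently equates $\{R=k\}$ with $\{\mathcal{T}\cap(x\ominus\tilde{x})=\emptyset\}$ and $\{R<k\}$ with its complement, with no assumption on $f_\theta$ ruling out, e.g., distinct ablated inputs mapping to coincident top-$K$ sets --- so your explicit statement of that assumption makes your version, if anything, more careful than the original.
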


\begin{proof}
We now follow the notations to complete the proof. Note that the $i$ in Eq. \ref{equ:theroem} counts the number of top-k largest pixels in two output density maps (normal ablated and adversarial ablated images). Next, we will prove the upper and the lower possibility to bound $Pr$ with probability at least $\alpha \in [\underline{Pr},\ \overline{Pr}]$.\par
If $i = k$, then we have $\mathcal{T} \cap (x\ominus \tilde{x}) = \emptyset$, which indicates $x$ and $\tilde{x}$ are identical at all indices in $\mathcal{T}$ (That is, the ablated results of $x$ and $\tilde{x}$ are identical, $ABLATE(x,\mathcal{T}) = ABLATE(\tilde{x},\mathcal{T})$). In this case, we have:
\begin{align}
    Pr(\mathcal{T} \cap (x\ominus \tilde{x}) = \emptyset) = \frac{{{d-\vert x\ominus \tilde{x} \vert} \choose k}}{{{d} \choose {k}}} \le \frac{C_{d-n}^k}{C_d^k} = \overline{Pr},
\end{align}
where ${d \choose k}$ and $C_d^{k}$ represent total ways of uniform choices of $k$ elements from $d$. 

If $i \le k$, we have $\mathcal{T}\cap(x\ominus \tilde{x}) \neq \emptyset$. Then, similarly:

\begin{align}
    & Pr(\mathcal{T} \cap (x\ominus \tilde{x}) \neq \emptyset) = 1 - Pr(\mathcal{T} \cap (x\ominus \tilde{x}) = \emptyset) \\ & = 1 - \frac{{{d-\vert x\ominus \tilde{x} \vert} \choose k}}{{{d} \choose {k}}} = 1 - \frac{C_{d-n}^k}{C_d^k} \ge \frac{C_n^k}{C_d^k} = \underline{Pr},
\end{align}\label{proof:worst case}where the final inequality denotes the \underline{\textbf{worst case}}: the retained pixels in the adversarial image $\tilde{x}$ are all picked from the adversarial perturbation regions of the image.
\end{proof}

\begin{table*}[htb]
\centering
\setlength{\tabcolsep}{1.4mm}{
\begin{tabular}{|c|c|c|c|c|c|c|c|c|c|c|}
\hline
\multirow{2}{*}{Models} & \multicolumn{10}{c|}{\textbf{White-box Attacks}}                                                                                                                                                 \\ \cline{2-11} 
                        & \textbf{MAE(0)} & \textbf{RMSE(0)} & \textbf{MAE(20)} & \textbf{RMSE(20)} & \textbf{MAE(40)} & \textbf{RMSE(40)} & \textbf{MAE(81)} & \textbf{RMSE(81)} & \textbf{MAE(163)} & \textbf{RMSE(163)} \\ \hline
\textbf{CSRNet}         & 68.20           & 115.00           & 112.56           & 154.23            & 179.44           & 204.64            & 312.48           & 396.43            & 463.53            & 589.35             \\ \hline
\textbf{DA-Net}         & 71.60           & 104.90           & 102.42           & 149.12            & 141.26           & 227.84            & 146.97           & 231.57            & 250.45            & 334.47             \\ \hline
\textbf{MCNN}           & 110.20          & 173.20           & 229.54           & 275.78            & 252.14           & 304.12            & 317.71           & 378.52            & 432.05            & 495.31             \\ \hline
\textbf{CAN}            & 62.30           & 100.00           & 303.26           & 407.49            & 315.12           & 423.64            & 386.46           & 499.76            & 410.23            & 528.42             \\ \hline
\textbf{CMTL}           & 101.30          & 152.40           & 185.43           & 253.64            & 259.76           & 356.88            & 312.72           & 416.43            & 402.51            & 439.53             \\ \hline
\multirow{2}{*}{}       & \multicolumn{10}{c|}{\textbf{Black-box Attacks}}                                                                                                                                                 \\ \cline{2-11} 
                        & \textbf{MAE(0)} & \textbf{RMSE(0)} & \textbf{MAE(20)} & \textbf{RMSE(20)} & \textbf{MAE(40)} & \textbf{RMSE(40)} & \textbf{MAE(81)} & \textbf{RMSE(81)} & \textbf{MAE(163)} & \textbf{RMSE(163)} \\ \hline
\textbf{CSRNet}         & 68.20           & 115.00           & 100.45           & 134.17            & 130.43           & 203.52            & 270.86           & 348.92            & 425.06            & 489.77             \\ \hline
\textbf{DA-Net}         & 71.60           & 104.90           & 98.20            & 136.05            & 108.23           & 157.56            & 114.01           & 159.78            & 152.27            & 205.61             \\ \hline
\textbf{MCNN}           & 110.20          & 173.20           & 136.85           & 197.31            & 142.68           & 211.66            & 151.09           & 221.88            & 164.05            & 226.01             \\ \hline
\textbf{CAN}            & 62.30           & 100.00           & 193.73           & 245.48            & 234.46           & 284.72            & 320.50           & 362.03            & 405.03            & 478.32             \\ \hline
\textbf{CMTL}           & 101.30          & 152.40           & 107.03           & 163.40            & 110.24           & 187.65            & 136.60           & 205.52            & 170.02            & 224.76             \\ \hline
\end{tabular}
}\caption{Experimental results of the proposed APAM methods. We implement the white-box and black-box attacks. Note that "MAE(20)" denotes the MAE value of patch size $20\times20$ attacks. All of these experiments are done on the ShanghaiTech dataset part A \cite{zhang2016single}.}\label{tab: all attack results}
\vspace{-5mm}
\end{table*}

\section{Experiments}

\begin{table}[htb]
\centering
\setlength{\tabcolsep}{1.5mm}{
\begin{tabular}{|c|c|c|c|c|c|c|}
\hline
\multicolumn{7}{|c|}{\textbf{White-box Attack Results with Various Attack Targets}}                                   \\ \hline
\textbf{Models} & \textbf{M(5)} & \textbf{R(5)} & \textbf{M(20)} & \textbf{R(20)} & \textbf{M(100)} & \textbf{R(100)} \\ \hline
\textbf{CSRNet} & 371.72        & 519.35        & 492.14         & 612.73         & 532.70          & 649.24          \\ \hline
\textbf{DA-Net} & 164.24        & 285.37        & 399.75         & 417.20         & 423.56          & 473.57          \\ \hline
\textbf{MCNN}   & 377.23        & 492.06        & 456.72         & 591.34         & 479.33          & 625.41          \\ \hline
\textbf{CAN}    & 363.83        & 502.62        & 436.64         & 649.38         & 487.91          & 674.63          \\ \hline
\textbf{CMTL}   & 309.84        & 412.94        & 463.22         & 598.76         & 487.57          & 637.88          \\ \hline
\end{tabular}
}
\caption{Testing APAM attacks with different targets (5GT, 20GT, 100GT). Intuitively, the second and third column denote the normal performances of victim models. In the table, $M$ and $R$ denote MAE and RMSE. We attack these networks under the white-box setting.}\label{tab:various attack target}
\vspace{-4mm}
\end{table}

\begin{table}[htb]
\centering
\vspace{-3mm}
\setlength{\tabcolsep}{1.5mm}{
\begin{tabular}{@{}ccccl@{}}
\toprule
\textbf{Patch Size}       & \textbf{20$\times$20} & \textbf{40$\times$40} & \textbf{81$\times$81} & \multicolumn{1}{c}{\textbf{163$\times$163}} \\ \midrule
\textbf{$\overline{Pr}$}  & 0.9752                & 0.9043                & 0.6611                & 0.1827                                      \\
\textbf{$\underline{Pr}$} & 1.7265e-147           & 4.0071e-120           & 1.7745e-92            & 3.9698e-65                                  \\ \bottomrule
\end{tabular}
}
\caption{Theoretical possibility analysis of the four selected adversarial patch. Note that the Upper bound possibility $\overline{Pr}$ means the possibility of picking non-adversarial pixels from the image. The $\underline{Pr}$ denotes the probability of \emph{worst case} (depicted in Eq. \ref{proof:worst case}) happening and we find in fact the randomized ablation method can almost avoid it ($\underline{Pr}$ is always close to zero)}.
\label{tab:theoretical possibility}
\vspace{-6mm}
\end{table}

\begin{table*}[htb]
\centering
\setlength{\tabcolsep}{1mm}{
\begin{tabular}{@{}ccccccccccc@{}}
\toprule
\multicolumn{11}{c}{\textbf{Certificate Defense Results}}                                                                                                                                                                                                                                                                   \\ \midrule
                                     & \textbf{MAE(0)} & \multicolumn{1}{c|}{\textbf{RMSE(0)}} & \textbf{MAE(20)} & \multicolumn{1}{c|}{\textbf{RMSE(20)}} & \textbf{MAE(40)} & \multicolumn{1}{c|}{\textbf{RMSE(40)}} & \textbf{MAE(81)} & \multicolumn{1}{c|}{\textbf{RMSE(81)}} & \textbf{MAE(163)} & \textbf{RMSE(163)} \\ \midrule
\multicolumn{1}{c|}{\textbf{CSRNet}} & 75.28           & \multicolumn{1}{c|}{134.87}           & 100.54           & \multicolumn{1}{c|}{142.67}            & 137.82           & \multicolumn{1}{c|}{192.93}            & 247.33           & \multicolumn{1}{c|}{362.87}            & 341.53            & 410.97             \\
\multicolumn{1}{c|}{\textbf{DA-Net}} & 89.54           & \multicolumn{1}{c|}{125.46}           & 97.34            & \multicolumn{1}{c|}{132.33}            & 124.78           & \multicolumn{1}{c|}{189.75}            & 141.22           & \multicolumn{1}{c|}{223.89}            & 199.76            & 302.76             \\
\multicolumn{1}{c|}{\textbf{MCNN}}   & 117.32          & \multicolumn{1}{c|}{185.44}           & 152.78           & \multicolumn{1}{c|}{263.11}            & 190.27           & \multicolumn{1}{c|}{288.50}            & 253.32           & \multicolumn{1}{c|}{367.24}            & 374.55            & 458.32             \\
\multicolumn{1}{c|}{\textbf{CAN}}    & 74.24           & \multicolumn{1}{c|}{117.76}           & 274.57           & \multicolumn{1}{c|}{382.32}            & 296.84           & \multicolumn{1}{c|}{406.88}            & 346.76           & \multicolumn{1}{c|}{463.59}            & 381.64            & 503.16             \\
\multicolumn{1}{c|}{\textbf{CMTL}}   & 114.52          & \multicolumn{1}{c|}{174.29}           & 169.75           & \multicolumn{1}{c|}{226.57}            & 221.37           & \multicolumn{1}{c|}{312.42}            & 276.38           & \multicolumn{1}{c|}{383.25}            & 362.97            & 421.24             \\ \bottomrule
\end{tabular}
}
\caption{Certificate defense results of $k=45$. Note that the results are the defense mechanism against white-box APAM attacks.}\label{tab:defense table}
\vspace{-5mm}
\end{table*}

\begin{table*}[htb]
\centering
\setlength{\tabcolsep}{1mm}{
\begin{tabular}{@{}ccccccccccc@{}}
\toprule
\multicolumn{11}{c}{\textbf{Defense Ablation Study}}                                                                                                                                                                                                                                                                                                                               \\ \midrule
                                                                                            & \textbf{MAE(0)} & \multicolumn{1}{c|}{\textbf{RMSE(0)}} & \textbf{MAE(20)} & \multicolumn{1}{c|}{\textbf{RMSE(20)}} & \textbf{MAE(40)} & \multicolumn{1}{c|}{\textbf{RMSE(40)}} & \textbf{MAE(81)} & \multicolumn{1}{c|}{\textbf{RMSE(81)}} & \textbf{MAE(163)} & \textbf{RMSE(163)} \\ \midrule
\multicolumn{1}{c|}{\textbf{\begin{tabular}[c]{@{}c@{}}Adversarial \\ Training\end{tabular}}} & {\underline{121.26}}    & \multicolumn{1}{c|}{{\underline{191.81}}}     & {\underline{188.32}}     & \multicolumn{1}{c|}{{\underline{271.56}}}      & {\underline{223.56}}     & \multicolumn{1}{c|}{{\underline{301.04}}}      & {\underline{282.37}}     & \multicolumn{1}{c|}{{\underline{372.75}}}      & {\underline{397.73}}      & {\underline{461.03}}       \\ \midrule
\multicolumn{1}{c|}{$k=45$}                                                                 & 117.32          & \multicolumn{1}{c|}{185.44}           & 152.78           & \multicolumn{1}{c|}{263.11}            & 190.27           & \multicolumn{1}{c|}{288.50}            & 253.32           & \multicolumn{1}{c|}{367.24}            & 374.55            & 458.32             \\
\multicolumn{1}{c|}{$k=100$}                                                                & 118.53          & \multicolumn{1}{c|}{188.97}           & 151.19           & \multicolumn{1}{c|}{264.65}            & 186.38           & \multicolumn{1}{c|}{291.44}            & 251.90           & \multicolumn{1}{c|}{370.18}            & 369.88            & 449.14             \\
\multicolumn{1}{c|}{$k=200$}                                                                & 127.68          & \multicolumn{1}{c|}{215.34}           & 159.45           & \multicolumn{1}{c|}{293.66}            & 201.15           & \multicolumn{1}{c|}{346.23}            & 262.07           & \multicolumn{1}{c|}{405.69}            & 383.52            & 470.75             \\ \bottomrule
\end{tabular}
}
\caption{Defense ablation study results. We compare the randomized ablation method with the most commonly used adversarial training (generating the adversarial patch during the model training) and all the experiments are done on the MCNN \cite{zhang2016single} model structure.}
\label{tab:ablation study results.}
\vspace{-5mm}
\end{table*}

\subsection{Experiment Setup}\label{sec:experiment_setup}
\textbf{Dataset and Networks.} We select five well pre-trained crowd counting models, which are publicly available: CSRNet \cite{li2018csrnet}, DA-Net \cite{zou2018net}, MCNN \cite{zhang2016single}, CAN \cite{liu2019context} and CMTL \cite{sindagi2017cnn} as our verification models thanks to their prevalent usage in the field. Additionally, we select the ShanghaiTech dataset \cite{zhang2016single} for retraining and evaluation because it is the most representative dataset in the field, which contains 1198 images with over 330,000 people annotated. Besides, we adopt mean absolute error and root mean squared error (defined in Eq. \ref{equ:mae&rmse}) as the evaluation metric.
\begin{equation}
\footnotesize
{
    MAE = \frac{1}{N} \cdot \sum_{i=1}^{N} \vert C^{GT}_i-\tilde{C}_i \vert,\ \ \ RMSE = \sqrt{\frac{1}{N} \cdot \sum_{i=1}^{N}(C^{GT}_i-\tilde{C}_i)^{2}},
    }\label{equ:mae&rmse}
\end{equation}
where $N$ is the number of images, $C^{GT}_i$ and $\tilde{C}_i$ separately denote the $i$-th ground-truth counting and the counting of the corresponding adversarial image. We apply the following settings in our implementations: we set the attack target as $l^{\ast} = 10\cdot GT$ and set $\gamma = 0.01$ in Eq. \ref{equ:5} to balance the two terms. Moreover, four patch sizes are selected in our experiments: $20 \times 20$, $40 \times 40$, $81 \times 81$ and $163 \times 163$ to represent 0.07\%, 0.31\%, 1.25\% and 5.06\% of the image size, respectively. Moreover, training details of randomized ablation are summarized in Supplementary Materials.\par

\subsection{APAM in the White-box Setting}
The upper part of Table \ref{tab: all attack results} shows results of white-box APAM attacks. We observe that the effectiveness of attacks strengthens as patch size climbs from 20 to 163. We find that all of the five crowd counting models suffer most when patch size reaches $163 \times 163$ since the adversarial perturbation reaches the maximum. However, the robustness varies from model to model. For example, compared with other models, DA-Net \cite{zou2018net} remains relatively robust against white-box APAM attacks. For MCNN \cite{zhang2016single} and CAN \cite{liu2019context}, their performances degrade significantly even when the adversarial patch is small. Moreover, we compute the relative increase percentage of these values and find that the vulnerability of networks depends on whether the specific patch size is reached or not. For instances, the fastest increase percentage of MAE and RMSE values of CMTL \cite{sindagi2017cnn} are 40.09\% and 40.70\% when the patch size changes from 20 to 40. Then, we infer the threshold value of CMTL \cite{sindagi2017cnn} is between 20 to 40. We find the effectiveness of APAM attacks is affected by the attack target $l^\ast$. In the experiment, we use three different attack target $l^{\ast} = 5GT, 20GT, 100GT$ to study the effectiveness of attack target $l^\ast$. From Table \ref{tab:various attack target}, we observe that the values of MAE and RMSE increase when the $l^\ast$ becomes larger and the performances of victim networks are basically in accord with those of the attack experiments with $l^{\ast} = 10GT$. \par

\subsection{APAM in the Black-box Setting}
Since the mainstream crowd counting models have three popular structures (dilated convolution, context-aware and multi-scale structures), we select three representative nets as substitute models: CSRNet \cite{li2018csrnet} (dilated convolution structure), CAN \cite{liu2019context} (context-aware structure) and DA-Net \cite{zou2018net} (multi-scale structure). We jointly optimize Eq. \ref{equ:9} with substitute models to make the generated black-box patch consist of contextual information and be generally robust towards different model structures. The targeted models are MCNN \cite{zhang2016single} and CMTL \cite{sindagi2017cnn}, which are also the representative models with multi-column structures.\par

The black-box APAM attack results are summarized in the lower part of Table \ref{tab: all attack results}. Compared with the white-box APAM attacks, black-box APAM attacks are somewhat weaker. But we still find some intriguing phenomena: consistent with white-box attacks, CSRNet \cite{li2018csrnet} and CAN \cite{liu2019context} are still vulnerable to the adversarial patch and their performances are severely degraded. Simultaneously, DA-Net \cite{zou2018net} is relatively robust against adversarial patch both in white-box and black-box settings. Particularly, we find DA-Net \cite{zou2018net}, MCNN \cite{zhang2016single} and CMTL \cite{sindagi2017cnn} stay relatively robust when the patch size is small (e.g., $20 \times 20$ and $40 \times 40$). One possible explanation for this is that when training a black-box adversarial patch, it equals implementing some kind of adversarial retraining, which strengthens the involved models.\par

\subsection{Physical Attacks in a Real-world Scenario}
\textbf{Evaluation.} We define the error rate $\pi$ to evaluate the performances of crowd counting models. In the equation, $y$ denotes normal output prediction number and $\tilde{y}$ denotes adversarial prediction number.
\begin{equation}
    Error\ rate:\ \pi = \left| \frac{(y - \tilde{y})}{y} \right| \times 100\%.
    \label{equ:error rate}
\end{equation}

We first choose a well-trained adversarial patch and print it out. Then, we select a large shopping mall as our physical scene and take a group of photos to study (seen in Fig. \ref{fig:physical_attack}). In Fig. \ref{fig:physical_attack}, our proposed attack reaches the error rate $\pi$ as 526.2\% for CSRNet \cite{li2018csrnet}, 982.9\% for CAN \cite{liu2019context}, 252.2\% for DA-Net \cite{zou2018net}, 100\% for MCNN \cite{zhang2016single} and 76.9\% for CMTL \cite{sindagi2017cnn}. From Fig. \ref{fig:physical_attack}, the physical APAM attack does degrade the prediction of networks even the adversarial patch area is less than 6\% of that of the image. We observe that there are two kinds of the network estimation: an extremely large number and a small number. For instances, Fig. \ref{fig:physical_attack} shows that CSRNet \cite{li2018csrnet}, DA-Net \cite{zou2018net} and CAN\cite{liu2019context} incline to predict larger number of the crowd while MCNN \cite{zhang2016single} and CMTL \cite{sindagi2017cnn} tend to estimate the number as approximately zero. Actually, the attack target is set as $l^{\ast}=10\times GT$. We attribute the phenomenon to the physical environment factors (e.g., light, size and location). And it does not mean physically attacking failure since we define the error rate $\pi$ in Eq. \ref{equ:error rate} to consider the model to be successfully physically attacked when its error rate is high. Last but not least, experimental results demonstrate that the physical adversarial patch can severely degrade the performance of models.\par

\subsection{Results of Certified Defense}
To begin with, we summarize the training parameters in Supplementary Materials for better reproducibility. The final training results of the five crowd counting models via randomized ablation are summarized in Table \ref{tab:defense table}. In general, we find that along with the adversarial robustness enhancement is the loss of some clean accuracy (i.e., the MAE or RMSE values decrease in the adversarial environments while they increase in the clean environments). This phenomenon is well explained in \cite{Acemap.82507011}. There exists a balance between the adversarial robustness and the clean accuracy. Compared Table \ref{tab: all attack results} with Table \ref{tab:defense table}, we find the randomized ablation training method helps the crowd counting models to be more robust against APAM attacks (e.g., the $MAE(81)$ value of MCNN \cite{zhang2016single} decreases 64.39 and the $RMSE(81)$ value of MCNN \cite{zhang2016single} decreases 11.28). For other models, we all observe the decrease trend, which demonstrate the practical effectiveness of the proposed method. Moreover, we also observe that the clean accuracy loss is indeed acceptable In Table \ref{tab:defense table}, $MAE(0)$ value of MCNN \cite{zhang2016single} increases from 110.20 to 117.32, $RMSE(0)$ value of MCNN \cite{zhang2016single} increases from 173.20 to 185.44. Compared with the popular used \emph{adversarial training} \cite{goodfellow2014explaining} in Table \ref{tab:ablation study results.}, the $MAE(0)$ and $RMSE(0)$ values of MCNN \cite{zhang2016single} are 121.26 and 191.81, respectively. Adversarial training results of clean examples are worse than those of randomized ablation, which demonstrates the effectiveness. \par

\begin{figure}[t]
\centering
 \includegraphics[width=0.48\textwidth]{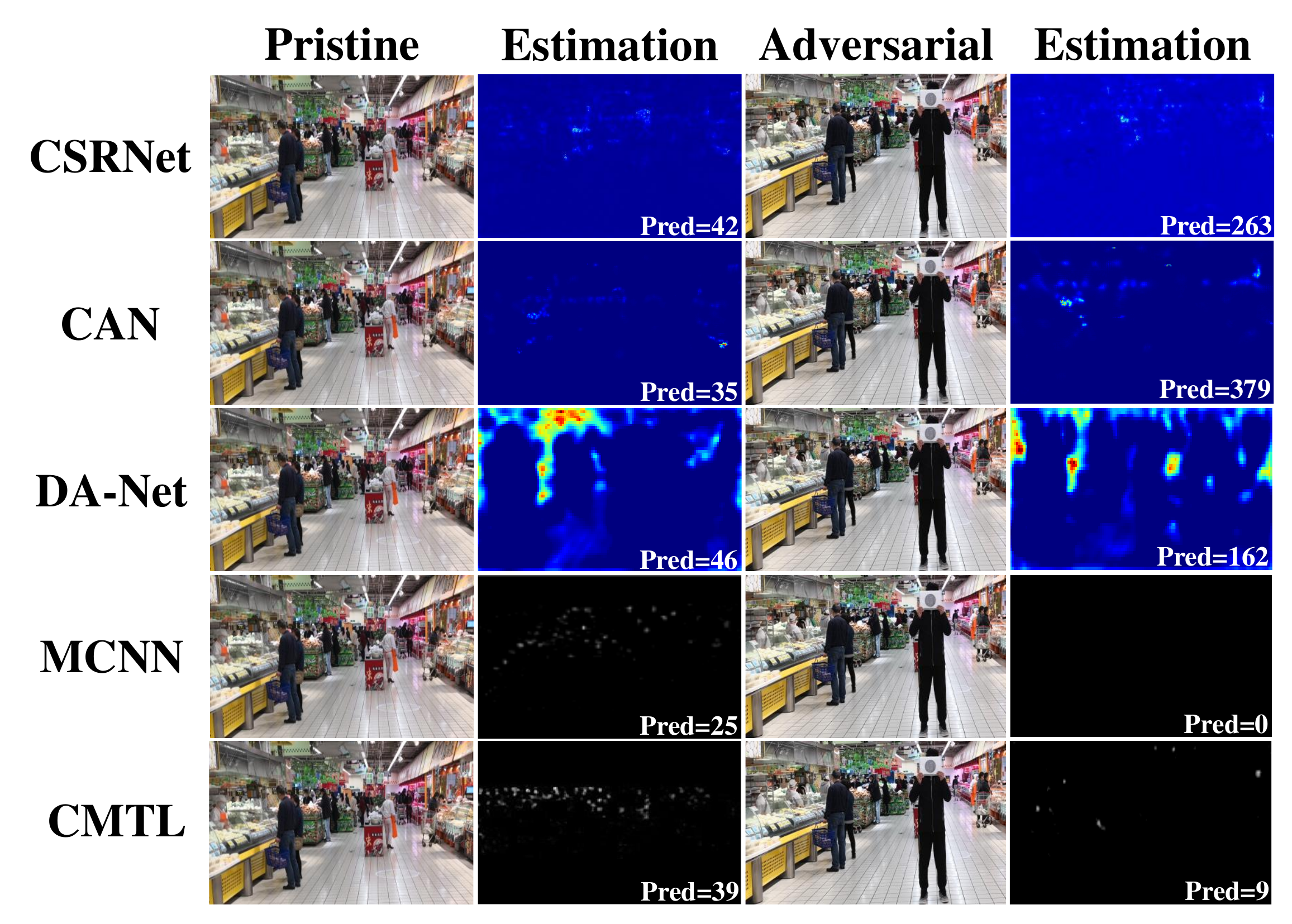}
 \caption{Real-world attack in a physical scenario. We print the patch out and put it in a shopping mall. Surprisingly, after we feed the image to crowd counting networks the prediction of networks are strongly perturbed.}\label{fig:physical_attack}
 \vspace{-7mm}
\end{figure}

\subsection{Ablation Study}
The most intuitive approach to enhance the adversarial robustness of DNNs is \emph{Adversarial Training} \cite{goodfellow2014explaining, Acemap.245763216}. In the adversarial patch scenario, we generate the white-box adversarial patches during the model training loop to enhance the robustness. Note that there are other improved defense methods based on RobustBench\footnote{\url{https://github.com/RobustBench/robustbench}}, their effectiveness on regression models remains an open problem. The experimental results are summarized in Table \ref{tab:ablation study results.}. We now formulate the loss function used in the adversarial training loop:
\begin{equation}
\begin{aligned}
    loss = \lambda \times clean\_loss + (1-\lambda) \times adv\_loss,
\end{aligned}
\end{equation}
where $x_i$ denotes the clean sample images and $l_i$ is the corresponding groundtruth map. $clean\_loss = \frac{1}{2N} \sum^N_{i=1} \vert \vert f (x_i) - l_i \vert \vert^2$ and $adv\_loss = \frac{1}{2N} \sum^N_{i=1} \vert \vert f (\tilde{x}_i) - l_i \vert \vert^2$. $\tilde{x}_i$ is the adversarial example. During the adversarial training loop, we use the following steps:
\begin{itemize}
    \item warm up the model with clean examples in first $m$ epochs
    \item slowly decrease $\lambda$ from 1 to 0.5 in $n$ epochs
    \item maintain the balance $\lambda=0.5$ to finish rest epochs
\end{itemize}
From Table \ref{tab:ablation study results.}, we compare the adversarial training method with randomized ablation ($k=45,100,200$). In conclude, the two methods achieve comparable performances on clean examples (when $k=45,100$ randomized ablation clean MAE value is 3 lower than that of adversarial training). When compared on the white-box APAM attack evaluations, we find the randomized ablation approach beats the adversarial training. For examples, on the small adversarial patch such as size $20\times 20$, the $MAE(20)$ of $k=100$ is 37.13 lower than that of adversarial training. Besides, for large patches such as $163\times163$, the $MAE(163)$ of $k=100$ is 27.85 lower than that of adversarial training.\par

\textbf{Theoretical Possibility of Adversarial Patch via Randomized Ablation.} We use the Theorem \ref{theorem1} to predict the practical possibility of retaining pixels from non-adversarial regions (upper bound possibility $\overline{Pr}$) and adversarial regions (lower bound possibility $\underline{Pr}$). The results of four patches ($20\times 20, 40\times 40, 81\times 81 and 163\times 163$) are summarized in Table \ref{tab:theoretical possibility}. We find that $\overline{Pr}$ decreases as the patch size increases. Specifically, the $\overline{Pr}$ drops sharply (0.6611 to 0.1827) when patch size increases from $81 \times 81$ to $163 \times 163$. Meanwhile, although the lower bound possibility $\underline{Pr}$ increases dramatically (from $1.7\times 10^{-147}$ to $3.9\times 10^{-65}$) when patch size increases, $\underline{Pr}$ is still close to zero and this phenomenon indicates the \textbf{worst case} (depicted in Eq. \ref{proof:worst case}) is unlikely to happen, which guarantees the stability of the proposed randomized ablation method.\par

\section{Conclusion and Discussion}
In this article, we introduce an adversarial patch attack framework named APAM, which poses a severe threat to crowd counting models. We further propose a general defense method to certify the robustness of crowd counting models via randomized ablation. We theoretically and experimentally demonstrate the effectiveness of the proposed method.\par 

\textbf{Physical Evaluations.} Despite the effectiveness of the framework, we now present some limitations. We mainly design our patch in the digital space, and therefore, we indeed find our physical evaluations somewhat weak since elaborately design the physical patch for attack is another story (i.e., light conditions, angles of the patch, human impact and so on). But the experimental results are quite exciting and motivates the further research on the physical evaluations of adversarial robustness of regression models. \par

\textbf{Experiments.} We evaluate the attack and defense framework mainly (indeed, our method is general to other popular datasets such as UCF-CC-50 \cite{idrees2013multi} and UCF-QNRF \cite{idrees2018composition}) on the ShanghaiTech dataset \cite{zhang2016single}. The reason that we only evaluate on one dataset are as follows: \textbf{1)} ShanghaiTech dataset is one of the most representative and challenging datasets in crowd counting \cite{zhang2016single}, as is detailed claimed in Section \ref{sec:experiment_setup}. \textbf{2)} For the real-life adversary, successfully attacking victim models with various structures is definitely more significant. Since we miss the attack baseline to compare, we have added the random patch experiments in Supplementary Materials for the better understanding.\par 

\begin{acks}
This work is supported by National Natural Science Foundation of China (NSFC) under grant no. 61972448. (Corresponding author: Pan Zhou). We thank anonymous reviewers for the constructive feedbacks. We thank Xiaodong Wu, Hong Wu and Haiyang Jiang for the helps in physical experiments. We thank Shengqi Chen, Chengmurong Ding, Kexin Zhang and Chencong Ren for their valuable discussions with the work. \par
\end{acks}

\clearpage
\vfill\eject
\bibliographystyle{ACM-Reference-Format}
\bibliography{ref}

\end{document}